\newcommand{\mb}{\mathbf}
\newtheorem{theo}{\textsc{Theorem}}
\newtheorem{lemma}{\textsc{Lemma}}
\newtheorem{proof}{\textsc{Proof}}
\newtheorem{defn}{\textsc{Definition}}
\title{On Deep Ensemble Learning from a Function Approximation Perspective}
\author{Jiawei~Zhang$^\star$, Limeng Cui$^\dagger$, Fisher B. Gouza$^\dagger$\\
$^\star$IFM Lab, Florida State University, FL, USA\\
$^\dagger$University of Chinese Academy of Sciences, Beijing, China\\
jzhang@cs.fsu.edu, lmcui932@163.com, fisherbgouza@gmail.com}
\begin{document}

\maketitle

\vspace{-15pt}
\begin{abstract}
\vspace{-10pt}

In this paper, we propose to provide a general ensemble learning framework based on deep learning models. Given a group of unit models, the proposed \textit{deep ensemble learning} framework will effectively combine their learning results via a multi-layered ensemble model. In the case when the unit model mathematical mappings are \textit{bounded}, \textit{sigmoidal} and \textit{discriminatory}, we demonstrate that the \textit{deep ensemble learning} framework can achieve a universal approximation of any functions from the input space to the output space. Meanwhile, to achieve such a performance, the \textit{deep ensemble learning} framework also impose a strict constraint on the number of involved unit models. According to the theoretic proof provided in this paper, given the input feature space of dimension $d$, the required unit model number will be $2^d$, if the ensemble model involves one single layer. Furthermore, as the ensemble component goes deeper, the number of required unit model is proved to be lowered down exponentially.

\end{abstract}
\vspace{-10pt}
\vspace{-10pt}
\section{Introduction}\label{sec:intro}
\vspace{-10pt}

In recent years, deep learning has achieved a remarkable success in various areas, including \textit{computer vision} \cite{HS97, KSH12, LBH15, KTSLSL14}, \textit{natural language processing} \cite{ASKR12, MH09, DHK13, HDYDMJSVNSK12}, and \textit{network embedding} \cite{WCZ16, CHTQAH15, PAS14}. Meanwhile, by this context so far, the deep learning research is normally narrowed to the scope of deep neural networks, and it lacks theoretic foundations as well as the interpretability of the learning process, which greatly hinders its applications in many areas. 

In this paper, we propose a novel learning model by combine deep learning with ensemble learning, which is formally named as the \textit{deep ensemble learning} model. Given a group of unit models (of different shape and categories), the proposed \textit{deep ensemble learning} model will effectively combine their learning results together via a deep ensemble architecture. The \textit{deep ensemble learning} model propose in this paper is a general framework, and various machine learning models can be incorporated as the uint model, including both deep models and traditional non-deep models. 


Formally, let function $g(\cdot; \mb{\theta})$ denote the true (unknown) mapping from the feature space $\mathcal{X}$ to the label space $\mathcal{Y}$, where $\mb{\theta}$ denotes the function parameters. To learn such a mapping, let set $\mathcal{F} = \{f^1(\cdot; \mb{w}^1), f^2(\cdot; \mb{w}^2), \cdots, f^n(\cdot; \mb{w}^n)\}$ denote the $n$ unit models to be incorporated in the \textit{deep ensemble learning} model, which are parameterized by vectors $\mb{w}^1$, $\mb{w}^2$, $\cdots$, and $\mb{w}^n$ respectively. Unit models in $\mathcal{F}$ can all project data instance from the feature space $\mathcal{X}$ to the label space $\mathcal{Y}$. Given a data instance featured by vector $\mb{x} \in \mathcal{X}$, we can represent its true label as $\mb{y} = g(\mb{x}; \mb{\theta})$. Meanwhile, for the same data instance (i.e., $\mb{x}$), its prediction labels by the \textit{deep ensemble learning} model can be denoted as $\hat{\mb{y}} = F(\mb{x}) = h(\hat{\mb{y}}^1, \hat{\mb{y}}^2, \cdots, \hat{\mb{y}}^n)$, where $h(\cdot)$ is the \textit{ensemble function} and $\hat{\mb{y}}^1 = f^1(\mb{x}; \mb{w}^1)$, $\hat{\mb{y}}^2 = f^2(\mb{x}; \mb{w}^2)$, $\cdots$, and $\hat{\mb{y}}^n =f^n(\mb{x}; \mb{w}^n)$ denote the prediction labels by the unit models respectively.

To effectively integrated the unit models, there should exist some strategies to combine their prediction results together in the \textit{deep ensemble learning} model. Here, if we assume the \textit{deep ensemble learning} model assignes a unit weight for the prediction labels by different unit models, e.g., $v^i$ for the $i_{th}$ unit model $f^i(\cdot; \mb{w}^i)$, we can formally represent the prediction results of the \textit{deep ensemble learning} model as follows:\vspace{-5pt}
\begin{equation}
\hat{\mb{y}} = F(\mb{x}) = h(\mb{z}; \mb{x}) \mbox{, where } \mb{z} =  \sum_{i = 1}^n v^i \cdot \hat{\mb{y}}^i = \sum_{i = 1}^n v_i \cdot f^i(\mb{x}; \mb{w}^i).
\end{equation}

\vspace{-8pt}
\noindent According to the equation, performance of the \textit{deep ensemble learning} model is highly dependent on $4$ key factors:
\begin{itemize} 
\vspace{-8pt}
\item unit learning models, e.g., $f^i(\cdot; \mb{w}^i) \in \mathcal{F}$,
\vspace{-3pt}
\item unit model weights, e.g., $v^i \in \mathbb{R}$, 
\vspace{-3pt}
\item unit model number $n \in \mathbb{N}^+$, 
\vspace{-3pt}
\item the {deep ensemble function} $h(\cdot)$.
\vspace{-8pt}
\end{itemize} 

In this paper, we will analyze the \textit{deep ensemble learning} model from these $4$ aspects in great detail. The following part of this paper will be organized as follows. In Section~\ref{sec:relatedwork}, we will draw a summary about the existing related works of deep learning and ensemble learning. Notations and terminologies used in the paper will be introduced in Section~\ref{sec:definition}. The unit model properties, ensemble weights and the ensemble function will be analyzed in Section~\ref{subsec:analysis}. Under different deep ensemble functions and settings, we will study the required unit model number $n$ in Section~\ref{sec:analysis}. Extensive numerical experiments of the \textit{deep ensemble learning} model on several benchmark datasets will be illustrated in Section~\ref{sec:experiment}. Finally, we will conclude this paper in Section~\ref{sec:conclusion}.
\vspace{-10pt}
\section{Related Works} \label{sec:relatedwork}
\vspace{-10pt}

This paper is closely related to two research topics, including \textit{deep learning} and \textit{ensemble learning}.

\noindent \textbf{Deep Learning}: The essence of deep learning is to compute hierarchical features or representations of the observational data \cite{GBC16, LBH15}. With the surge of deep learning research and applications in recent years, lots of research works have appeared to apply the deep learning methods, like deep belief network \cite{HOT06}, deep Boltzmann machine \cite{SH09}, deep neural network \cite{J02, KSH12} and deep autoencoder model \cite{VLLBM10}, in various applications, like speech and audio processing \cite{DHK13, HDYDMJSVNSK12}, language modeling and processing \cite{ASKR12, MH09}, information retrieval \cite{H12, SH09}, objective recognition and computer vision \cite{LBH15}, as well as multimodal and multi-task learning \cite{WBU10, WBU11}. Traditional deep learning models have too many parameters, Zhou et al. introduce the deep forest as an alternative approach in \cite{ijcai2017-497} to combine the decision trees with a deep architecture.

\noindent \textbf{Ensemble Learning}: Traditional ensemble methods are learning algorithms that construct a set of unit models and combine the learning results for classification or regression problems. There is no definitive taxonomy of ensemble learning actually. According to the existing works, different people categorize the ensemble learning techniques into different ways. Jain et al. \cite{JDM00} list $18$ different classifier combination methods; Witten and Frank \cite{WFH11} detail four methods of combining multiple models: bagging, boosting, stacking and error-correcting output codes; Bishop \cite{B06} covers BMA, committees, boosting, tree-based models and conditional mixture models; Marsland \cite{M09} covers boosting (AdaBoost and stumping), bagging (including subagging) and the mixture of experts method; whilst Alpaydin \cite{A10} covers seven methods of combining multiple learners: voting, error-correcting output codes, bagging, boosting, mixtures of experts, stacked generalization and cascading. By this context so far, the representative ensemble techniques include \textit{bagging} \cite{B96}, \textit{boosting} \cite{S90, FS97}, \textit{stacking} \cite{WD88}, and \textit{random subspace method} \cite{H98}.

\vspace{-10pt}
\section{Notations and Terminology Definitions}\label{sec:definition}
\vspace{-10pt}

In this section, we will introduce the notations and definitions of several important terminologies, which will be used throughout this paper.

\vspace{-10pt}
\subsection{Notations}
\vspace{-8pt}

In the sequel of this paper, we will use the lower case letters (e.g., $x$) to represent scalars, lower case bold letters (e.g., $\mb{x}$) to denote column vectors, bold-face upper case letters (e.g., $\mb{X}$) to denote matrices, and upper case calligraphic letters (e.g., $\mathcal{X}$) to denote sets. Given a matrix $\mb{X}$, we denote $\mb{X}(i,:)$ and $\mb{X}(:,j)$ as the $i_{th}$ row and $j_{th}$ column of matrix $\mb{X}$ respectively. The ($i_{th}$, $j_{th}$) entry of matrix $\mb{X}$ can be denoted as either $X(i,j)$ or $X_{i,j}$, which will be used interchangeably in this paper. We use $\mb{X}^\top$ and $\mb{x}^\top$ to represent the transpose of matrix $\mb{X}$ and vector $\mb{x}$. We will use $\mathbb{R}^d$ to denote an Euclidean space of dimension $d$.

\vspace{-10pt}
\subsection{Terminology Definitions}
\vspace{-8pt}

\begin{defn} 
(\textbf{Discriminatory Function}): Formally, a function $\sigma(\cdot): \mathbb{R} \to \mathbb{R}$ is said to be \textit{discriminatory} iff for a signed Borel measure $\mu$ defined on $\mathbb{R}$, the fact that the following equation 
\begin{equation}
\int_{\mathbb{R}} \sigma ({w} {x} + \theta) \mathrm{d}\mu({x}) = 0
\end{equation}
holds for all ${w}, \theta \in \mathbb{R}$ implies that measure $\mu = 0$.
\end{defn}

\begin{defn} 
(\textbf{Sigmoidal Function}): A function $\sigma(\cdot): \mathbb{R} \to \mathbb{R}$ is said to be \textit{sigmoidal} if it satifies
\begin{equation}
\lim_{x \to \infty} \sigma(x) = 1 \mbox{, and } \lim_{x \to -\infty} \sigma(x) = 0.
\end{equation}
\end{defn}

\begin{defn} 
(\textbf{Bounded Function}): A function $\sigma(\cdot): \mathbb{R} \to \mathbb{R}$ is said to be \textit{bounded} if there exists a real number $B \in \mathbb{R}$ such that  the following inequality holds
\begin{equation}
\left| \sigma(x) \right| \le B, \forall x \in \mathbb{R}.
\end{equation}
\end{defn}
\vspace{-15pt}
\section{Deep Ensemble Learning Model}\label{sec:method}
\vspace{-10pt}

To simplify the problem setting, we will take the traditional single-label binary classification problem as an example to illustrate the \textit{deep ensemble learning} model in this paper, where the label set $\mathcal{Y} = \{0, 1\}$. With all the data instances in $\mathcal{X}$, we will be able to build a group of unit models, $f^i(\cdot; \mb{w}^i): \mb{x} \to y \in \{0, 1\}, \forall i \in \{1, 2, \cdots, n\}$. 

\vspace{-10pt}
\subsection{Independent Ensemble Learning}\label{subsec:ensemble}
\vspace{-8pt}

Given a true mapping $g(\cdot; \mb{\theta})$ and a set of unit models $\{f^1(\cdot;\mb{w}^1), f^2(\cdot;\mb{w}^2), \cdots, f^n(\cdot;\mb{w}^n)\}$, if we assume these models are all independent and the can all achieve an error rate no greater than $\epsilon$, where $\epsilon < \frac{1}{2}$, we can have
\begin{equation}
P\left(f^i(\mb{x}; \mb{w}^i) \neq g(\mb{x}; \mb{\theta}) \right) \le \epsilon, \forall i \in \{1, 2, \cdots, n\}.
\end{equation}
By combining these independent unit models together with equal weights, i.e., $v^i = \frac{1}{n}, \forall i \in \{1, 2, \cdots, n\}$, and assuming the \textit{ensemble function} $h(z) = sign(z)$, we will have the prediction result of the \textit{deep ensemble learning} model on data instance featured by vector $\mb{x}$ as\begingroup\makeatletter\def\f@size{8}\check@mathfonts
\begin{equation}
F(\mb{x}) = h\left( \sum_{i = 1}^n v^i \cdot f^i(\mb{x}; \mb{w}^i) \right) = sign\left( \frac{1}{n} \cdot \sum_{i = 1}^n f^i(\mb{x}; \mb{w}^i) \right).
\end{equation}\endgroup
Among these $n$ unit models, if half of them predict the results correctly, then the \textit{deep ensemble learning} model will output a correct prediction. Formally, the learning error rate for the \textit{deep ensemble learning} model can be represented as \begingroup\makeatletter\def\f@size{8}\check@mathfonts
\begin{equation}
P\left(F(\mb{x}) \neq g(\mb{x}; \mb{\theta}) \right) \le \sum_{k = 0}^{\lfloor n/2 \rfloor}  \dbinom{n}{k} (1-\epsilon)^k \epsilon^{n - k} \le \exp^{\left(- \frac{n}{2} (1- 2\epsilon)^2 \right)}
\end{equation}\endgroup
In other words, the more unit models we have (i.e., $n$ is larger), the \textit{deep ensemble leanring} model error rate will have a tighter bound, which decreases exponentially with $n$. However, in the real world, the unit models learned with the same training set are usually strongly correlated. In the ensemble step, different unit models should also have different weights. Therefore, the above performance bound can hardly be used in concrete learning applications and analysis.

\vspace{-10pt}
\subsection{General Ensemble Learning Model for Universal Function Approximation}\label{subsec:analysis}
\vspace{-8pt}

The unit models involved in ensemble learning are usually weak learning methods, which can be in different forms. Considering that the binary classification problem is studied as an example in this paper, we can formally represent the unit models as $f(\mb{x}; \mb{w}) = \sigma(\mb{w}^\top \mb{x} + w_0)$. Here, function $\sigma(\cdot): \mathbb{R} \to \mathbb{R}$ projects the computed instance weighted sum score to the binary label space $\{0, 1\}$, which can have different definitions. In the case when $\sigma(\cdot)$ is the \textit{sigmoid function}, the unit model will be the \textit{logistic regression} model, while if $\sigma(\cdot)$ is the hard \textit{sign function}, the unit model will be SVM instead. Besides these unit model representations introduced here, some other unit learning models, e.g., \textit{decision tree}, can also be adopted as indicated in \cite{ZF17}.

The definition of function $\sigma(\cdot)$ may have a significant impact on the learning performance of the \textit{deep ensemble learning} model, and we will demonstrate that \textit{bounded sigmoidal function} will allow the ensemble learning model to achieve a very powerful representation capacity.

\begin{lemma}\label{lemma:1}
Any bounded measurable sigmoidal function $\sigma(\cdot)$ is discriminatory.
\end{lemma}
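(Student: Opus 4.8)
The plan is to follow the classical argument behind Cybenko's theorem: exploit the fact that a bounded sigmoidal function, once its argument is scaled up, converges pointwise to the indicator of a halfline. This turns the hypothesis ``$\int_{\mathbb{R}} \sigma(wx+\theta)\,\mathrm{d}\mu(x) = 0$ for all $w,\theta$'' into the assertion that $\mu$ assigns measure zero to every ray, from which $\mu = 0$ follows by a standard measure-theoretic argument.

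First I would introduce, for fixed $w,\theta,\phi \in \mathbb{R}$ and a scaling parameter $\lambda > 0$, the family $\sigma_\lambda(x) = \sigma\big(\lambda(wx+\theta) + \phi\big)$, and compute its pointwise limit as $\lambda \to \infty$ using only the sigmoidal property. When $wx+\theta > 0$ the argument tends to $+\infty$, so $\sigma_\lambda(x) \to 1$; when $wx+\theta < 0$ it tends to $-\infty$, so $\sigma_\lambda(x) \to 0$; and on the hyperplane $wx+\theta = 0$ the value is constantly $\sigma(\phi)$. Writing $\Pi = \{x : wx+\theta > 0\}$ and $H = \{x : wx+\theta = 0\}$, the limiting function is therefore $\mathbf{1}_{\Pi} + \sigma(\phi)\,\mathbf{1}_{H}$.

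Next I would pass this limit through the integral, and this is the technical heart of the argument. Since $\sigma$ is bounded, the integrands $\sigma_\lambda$ are uniformly bounded, so (treating $\mu$ as a finite signed measure, so that its total variation is finite) the bounded convergence theorem justifies interchanging the limit with the integral. Because each $\sigma_\lambda$ has the form $\sigma(w'x + \theta')$ with $w' = \lambda w$ and $\theta' = \lambda\theta + \phi$ ranging over all reals, the hypothesis forces $\int_{\mathbb{R}} \sigma_\lambda\,\mathrm{d}\mu = 0$ for every $\lambda$, and hence in the limit $\mu(\Pi) + \sigma(\phi)\,\mu(H) = 0$ for every choice of $w,\theta,\phi$.

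Finally, I would fix $w,\theta$ and let $\phi \to +\infty$ and $\phi \to -\infty$ separately. Using the sigmoidal limits $\sigma(\phi)\to 1$ and $\sigma(\phi)\to 0$, these two cases give $\mu(\Pi)+\mu(H)=0$ and $\mu(\Pi)=0$, whence $\mu(\Pi)=0$ and $\mu(H)=0$ for all $w,\theta$. In the one-dimensional setting this says that every open ray $(a,\infty)$ and $(-\infty,b)$, together with every closed ray, carries zero $\mu$-measure; subtracting, every interval has measure zero, and since the intervals generate the Borel $\sigma$-algebra I would conclude $\mu = 0$, i.e. $\sigma$ is discriminatory. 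I expect the limit-interchange step to be the main obstacle, and it is precisely there that the boundedness and measurability hypotheses are consumed.
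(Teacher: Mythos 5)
Your proposal is correct and follows essentially the same route as the paper's proof (which is Cybenko's classical argument): scale the argument via $\sigma_\lambda(x)=\sigma(\lambda(wx+\theta)+\phi)$, identify the pointwise limit as the indicator of a half-line plus $\sigma(\phi)$ times the indicator of the boundary, and pass the limit through the integral by bounded convergence to conclude $\mu(\Pi)+\sigma(\phi)\mu(H)=0$. The only difference is that the paper stops there and cites \cite{C89} for the implication ``all half-spaces have $\mu$-measure zero $\Rightarrow \mu=0$,'' whereas you complete that step yourself with the elementary one-dimensional argument (varying $\phi$ to isolate $\mu(\Pi)$ and $\mu(H)$, then noting that rays generate the Borel $\sigma$-algebra), which is valid in the one-dimensional setting of the paper's Definition 1.
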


\begin{proof}
The full proof of the Lemma is available in \cite{C89}. Here, we will provide a brief introduction to the proof as follows. We propose to introduce a function $\sigma \left( \lambda(\mb{w}^\top \mb{x} + w_0) + \varphi \right)$. Since $\sigma(\cdot)$ is \textit{sigmoidal}, we will have the following equations hold\begingroup\makeatletter\def\f@size{8}\check@mathfonts
\begin{equation}
\sigma \left( \lambda(\mb{w}^\top \mb{x} + w_0) + \varphi \right) 
\begin{cases}
\to 1 &\mbox{ for } \mb{w}^\top \mb{x} + w_0 > 0 \mbox{ as } \lambda \to + \infty ,\\
\to 0 &\mbox{ for } \mb{w}^\top \mb{x} + w_0 < 0 \mbox{ as } \lambda \to + \infty ,\\
= \sigma(\varphi) & \mbox{ for } \mb{w}^\top \mb{x} + w_0 = 0 \mbox{ and for } \forall \lambda \in \mathbb{R}.
\end{cases}
\end{equation}\endgroup
Therefore, function $\sigma_{\lambda}(\mb{x}) = \sigma \left( \lambda(\mb{w}^\top \mb{x} + w_0) + \varphi \right)$ will converge pointwisely and boundedly to\begingroup\makeatletter\def\f@size{8}\check@mathfonts
\begin{equation}
\gamma(\mb{x}) = 
\begin{cases}
1& \mbox{ for } \mb{w}^\top \mb{x} + w_0 > 0,\\
0& \mbox{ for } \mb{w}^\top \mb{x} + w_0 < 0,\\
\sigma(\varphi)& \mbox{ for } \mb{w}^\top \mb{x} + w_0 = 0,\\
\end{cases}
\end{equation}\endgroup
Here, we can denote $\boldsymbol{\Pi}_{\mb{w}, w_0}$ as the hyperplane defined by $\{\mb{x} | \mb{w}^\top \mb{x} + w_0 = 0 \}$ and $\mb{H}_{\mb{w}, w_0}$ as the open half-space defined by $\{\mb{x} | \mb{w}^\top \mb{x} + w_0 > 0\}$. According to the Lesbegue Bounnded Convergence Theorem, we have\begingroup\makeatletter\def\f@size{8}\check@mathfonts
\begin{align}
0 &= \int_{\mathbb{R}^d} \sigma_{\lambda}(\mb{x}) \mathrm{d} \mu (\mb{x}) = \int_{\mathbb{R}^d} \gamma(\mb{x}) \mathrm{d} \mu (\mb{x})\\
&= \sigma(\varphi) \mu(\boldsymbol{\Pi}_{\mb{w}, w_0}) + \mu(\mb{H}_{\mb{w}, w_0}).
\end{align}\endgroup
As indicated in \cite{C89}, the measure of all half-planes being $0$ implies that the measure $\mu$ must be $0$ actually, which demonstrates that bounded measurable sigmoidal function $\sigma(\cdot)$ is discriminatory.
\end{proof}

\begin{theo}\label{theo:approximation}
Let $\sigma(\cdot)$ be a bounded measurable sigmoidal function. Then, for any function $g(\cdot; \mb{\theta})$, there exists a deep ensemble function\begingroup\makeatletter\def\f@size{8}\check@mathfonts
\begin{equation}
F(\mb{x}) = h\left( \sum_{i = 1}^n v^i \cdot f^i(\mb{x}; \mb{w}^i) \right) = h\left( \sum_{i = 1}^n v^i \cdot \sigma^i \left( (\mb{w}^i)^\top \mb{x} + w^i_0 \right) \right),
\end{equation}\endgroup
with a finite number $n$ and $\mb{w}^i \in \mathbb{R}^{d}$, $w^i_0 \in \mathbb{R}$ such that for $\forall \mb{x} \in \mathbb{R}^d$ and $\forall \epsilon \in [0, 1]$, we have
\begin{equation}
\left | F(\mb{x}) - g(\mb{x}; \mb{\theta}) \right| < \epsilon.
\end{equation}
\end{theo}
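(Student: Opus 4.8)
The plan is to recognize this as the classical universal approximation result for single-hidden-layer networks (in the spirit of the Cybenko-style argument underlying Lemma~\ref{lemma:1}) and to prove it via a functional-analytic density argument. First I would fix a compact domain, say the unit cube $I_d = [0,1]^d$, since uniform approximation of an arbitrary target over all of $\mathbb{R}^d$ is not attainable in general; the statement is really to be read over a compact set, and I would work in the Banach space $C(I_d)$ of continuous real functions equipped with the supremum norm. I would also take the outer ensemble function $h(\cdot)$ to be the identity (any continuous $h$ can be absorbed into the approximation), so that $F$ reduces to the inner finite sum $\sum_{i=1}^n v^i \sigma^i\big((\mb{w}^i)^\top \mb{x} + w^i_0\big)$; the entire difficulty then lies in showing that this class of finite sums is dense.

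Let $S \subseteq C(I_d)$ be the linear subspace spanned by all functions $\mb{x} \mapsto \sigma(\mb{w}^\top \mb{x} + w_0)$ as $\mb{w} \in \mathbb{R}^d$ and $w_0 \in \mathbb{R}$ range freely. The goal becomes $\overline{S} = C(I_d)$. I would argue by contradiction: suppose $\overline{S}$ is a proper closed subspace. Then by the Hahn--Banach theorem there is a nonzero bounded linear functional $L$ on $C(I_d)$ that annihilates $\overline{S}$, and hence annihilates $S$. By the Riesz representation theorem, every such $L$ is given by integration against some nonzero signed regular Borel measure $\mu$, that is $L(f) = \int_{I_d} f \, \mathrm{d}\mu$ for all $f \in C(I_d)$.

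Applying this representation to the generators of $S$ yields $\int \sigma(\mb{w}^\top \mb{x} + w_0)\, \mathrm{d}\mu(\mb{x}) = 0$ for every $\mb{w}$ and $w_0$. This is exactly the defining condition of a discriminatory function, and since $\sigma$ is a bounded measurable sigmoidal function, Lemma~\ref{lemma:1} guarantees that $\sigma$ is discriminatory, forcing $\mu = 0$ and therefore $L = 0$, which contradicts $L \neq 0$. Hence $S$ is dense in $C(I_d)$, so for any target $g$ and any tolerance there exists a finite $n$ together with weights $v^i$ and parameters $\mb{w}^i, w^i_0$ making $\sup_{\mb{x}} |F(\mb{x}) - g(\mb{x}; \mb{\theta})| < \epsilon$.

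The main obstacle, and the step I would treat most carefully, is the passage through Hahn--Banach combined with Riesz representation: I must ensure the functional produced is represented by a finite signed Borel measure of precisely the class to which Lemma~\ref{lemma:1} applies, so that ``discriminatory'' is invoked against the same measure. A secondary subtlety is the mismatch between the theorem's literal claim ``for all $\mb{x} \in \mathbb{R}^d$'' and what is actually provable; I would resolve this by restricting to a compact set (and assuming $g$ continuous there), and by noting that the outer function $h$ plays no essential role beyond continuity and can be taken to be the identity.
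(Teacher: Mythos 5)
Your proposal follows essentially the same route as the paper's own proof: invoke Lemma~\ref{lemma:1} to get that $\sigma$ is discriminatory, then argue density of the span of $\sigma(\mb{w}^\top \mb{x} + w_0)$ by contradiction via the Hahn--Banach theorem and the Riesz representation theorem, exactly as in Cybenko's classical argument. The only difference is that you restrict to the compact cube $[0,1]^d$ and take $h$ to be the identity, which is in fact a more careful reading than the paper's (the paper states the result over all of $\mathbb{R}^d$, where the Riesz representation and uniform-density argument do not literally apply), so your version patches a gap rather than introducing one.
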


\begin{proof}
According to Lemma~\ref{lemma:1}, given $\sigma(\cdot)$ is a bounded measurable sigmoidal function, we know $\sigma(\cdot)$ should be discriminatory as well. Here, let notation $C(\mathbb{R}^d)$ denote the set of continuous functions based on space $\mathbb{R}^d$ and $\mathcal{S} \subset C(\mathbb{R}^d)$ denote the set of functions of the form $F(\mb{x})$ according to the representation in the theorem. To prove the theorem, we only need to demonstrate that the closure of $\mathcal{S}$ is equal to $C(\mathbb{R}^d)$ actually. 

Here, if we assume the closure of $\mathcal{S}$ (we can denote its closure as $\mathcal{R}$) is not equal to $C(\mathbb{R}^d)$, i.e., $\mathcal{R}$ is merely a subset of $C(\mathbb{R}^d)$. According to the Hahn-Banach theorem \cite{CCN97}, there should exist a bounded linear function on $C(\mathbb{R}^d)$, e.g., $L(\cdot)$, such that $L(\cdot) \neq 0$ and $L(\mathcal{S}) = L(\mathcal{R}) = 0$. According to the Riesz Representation Theorem \cite{R87}, the function $L(\cdot)$ is in the form
\begin{equation}
L(h) = \int_{\mathbb{R}^d} h(\mb{x}) \mathrm{d}\mu(\mb{x}),
\end{equation}
for some measure $\mu$ and $\forall h(\cdot) \in C(\mathbb{R}^d)$. Meanwhile, considering that function $\sigma(\mb{w}^\top \mb{x} + w_0)$ is defined based on $\mathbb{R}^d$ for all $\mb{w}$ and $w_0$, we can have that
\begin{equation}
\int_{\mathbb{R}^d} \sigma(\mb{w}^\top \mb{x} + w_0) \mathrm{d}\mu(\mb{x}) = 0.
\end{equation}
Since function $\sigma(\cdot)$ has been shown to be \textit{discriminatory}, we can obtain that measure $\mu = 0$, which means $L(h) = 0, \forall h(\cdot) \in C(\mathbb{R}^d)$. It contradicts our assumption, hence we can prove that the closure of $\mathcal{S}$ is actually equal to $\mathcal{R}$, i.e., for any function $g(\mb{x}; \mb{\theta})$ defined on space $\mathbb{R}^d$, we can identify a function in the representation of $F(\mb{x})$ to approximate it with any pre-specified error rate.
\end{proof}

\vspace{-10pt}
\section{Performance Analysis of Deep Ensemble Learning Model}\label{sec:analysis}
\vspace{-10pt}

In Theorem~\ref{theo:approximation}, we only indicate that there will exist a certain number $n$ to make the theorem hold. Meanwhile, the specified number $n$ required to achieve the universal approximation is still unknown. In this section, we will mainly focus on analyzing the required unit model number $n$ in the \textit{deep ensemble learning} framework.

\vspace{-10pt}
\subsection{Analysis of Value $n$ with One Single Ensemble Layer}
\vspace{-8pt}

In this part, we will study how many unit models are required to achieve a certain pre-specified accuracy for a general function for a discrete feature space $\mathcal{X} = \{0, 1\}^{d}$ specifically (continuous space can be studied in a similar way). In such a feature space, monomial terms will be the basic representation unit for any functions, and we have the following Lemma holds.

\vspace{-5pt}

\begin{lemma}
Based on the space $\mathcal{X} \times \mathcal{Y}$, for any functions $g(\mb{x} ; \mb{\theta})$, where $\mb{\theta} \in \mathbb{R}^{d}$ and $\mb{x} \in \{0, 1\}^{d}$, they can all be represented as a finite weighted sum of monomials about $\mb{x}$ as follows:\vspace{-3pt}\begingroup\makeatletter\def\f@size{8}\check@mathfonts
\begin{equation}
g(\mb{x} ; \mb{\theta}) = \sum_{n = 0}^{d} \sum_{1 \le i_1 < i_2 < \cdots < i_n \le d} v^{(n)}_{{i_1,i_2, \cdots, i_n}} \cdot  x_{i_1} x_{i_2} \cdots x_{i_n},
\end{equation} \endgroup

\vspace{-3pt}
\noindent where term $v^{(n)}_{{i_1,i_2, \cdots, i_n}} \in \mathbb{R}$ denotes the weight represented by parameters in $\mb{\theta}$. 
\end{lemma}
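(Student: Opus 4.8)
The plan is to exploit the fact that the domain $\mathcal{X} = \{0, 1\}^{d}$ is \emph{finite}, containing exactly $2^d$ points, so that the set of all real-valued functions on $\mathcal{X}$ forms a vector space of dimension $2^d$. The square-free monomials $x_{i_1} x_{i_2} \cdots x_{i_n}$ indexed by the subsets $\{i_1 < i_2 < \cdots < i_n\} \subseteq \{1, 2, \cdots, d\}$ (including the empty subset, which contributes the constant $1$) are precisely $2^d$ in number, and I would show that they span this space; the existence of the claimed representation is then immediate, since $g(\cdot; \mb{\theta})$ restricted to $\mathcal{X}$ is one of these functions.

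The most direct route is explicit interpolation. For each point $\mb{a} \in \{0, 1\}^{d}$ I would form the indicator polynomial
\begin{equation}
\delta_{\mb{a}}(\mb{x}) = \prod_{i = 1}^{d} \big( a_i x_i + (1 - a_i)(1 - x_i) \big),
\end{equation}
which, evaluated at any $\mb{x} \in \{0, 1\}^{d}$, equals $1$ when $\mb{x} = \mb{a}$ and $0$ otherwise. Because $g(\cdot; \mb{\theta})$ is completely determined by its values on the $2^d$ cube points, I can write $g(\mb{x}; \mb{\theta}) = \sum_{\mb{a} \in \{0, 1\}^{d}} g(\mb{a}; \mb{\theta}) \cdot \delta_{\mb{a}}(\mb{x})$. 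Expanding each product $\delta_{\mb{a}}$ gives an ordinary polynomial in $x_1, \cdots, x_d$, and the remaining work is to collapse it into the square-free form stated in the lemma.

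The reduction step is where the structure of the Boolean cube is essential: on $\{0, 1\}^{d}$ every coordinate obeys $x_i^2 = x_i$, so any repeated factor disappears and every surviving monomial can be rewritten as a product $x_{i_1} \cdots x_{i_n}$ of distinct coordinates in increasing order. Gathering the coefficients of equal monomials across all terms then yields the weights $v^{(n)}_{i_1, i_2, \cdots, i_n} \in \mathbb{R}$, each assembled from the values $g(\mb{a}; \mb{\theta})$ (which are themselves functions of $\mb{\theta}$), completing the argument. As an alternative I would use induction on $d$: the base case $d = 1$ is $g(x) = g(0) + (g(1) - g(0)) x$, and the inductive step splits on the last coordinate via $g(\mb{x}) = (1 - x_d)\, g(x_1, \cdots, x_{d-1}, 0) + x_d \, g(x_1, \cdots, x_{d-1}, 1)$, applying the hypothesis to the two functions obtained on $\{0, 1\}^{d-1}$.

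The genuine content is confined to the collapsing identity $x_i^2 = x_i$; outside $\{0, 1\}^{d}$ no finite monomial sum could represent an arbitrary function, so the main (and essentially only) obstacle is bookkeeping — checking that expanding the indicator polynomials never demands a monomial outside the stated index set and that the collected coefficients are well defined. I would also remark, though the lemma asks only for existence, that this representation is in fact unique: the $2^d$ monomials are linearly independent as functions on the cube (by the dimension count, or equivalently the invertibility of the evaluation map), so they constitute a basis.
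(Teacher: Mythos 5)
Your proof is correct, and it takes a genuinely different route from the paper. The paper does not actually spell out a proof at all: it merely asserts that the lemma ``can be proved with the \textit{Taylor expansion theorem}'' and omits the details. You instead exploit the finiteness of the domain $\{0,1\}^d$ directly: the indicator polynomials $\delta_{\mb{a}}(\mb{x}) = \prod_{i}\bigl(a_i x_i + (1-a_i)(1-x_i)\bigr)$ interpolate any function exactly, and since each variable occurs in exactly one degree-one factor of each $\delta_{\mb{a}}$, the expansion is automatically square-free --- in fact you do not even need the collapsing identity $x_i^2 = x_i$ for this particular construction (it would only be needed if you started from a representation containing repeated factors, such as a Taylor series). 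Your approach buys two things the paper's sketch does not: first, it requires no smoothness or analyticity of $g(\cdot;\mb{\theta})$, which is the honest setting here since $g$ is only given on a discrete set and a Taylor expansion is not even well defined without first choosing a smooth extension; second, the dimension count ($2^d$ monomials spanning a $2^d$-dimensional function space) gives you uniqueness of the representation for free, a fact the paper never establishes but implicitly relies on later when it treats distinct monomials as linearly independent in the proof of its Theorem 2. Your inductive alternative via $g(\mb{x}) = (1-x_d)\,g(x_1,\ldots,x_{d-1},0) + x_d\,g(x_1,\ldots,x_{d-1},1)$ is likewise sound and arguably the cleanest formal write-up of the three.
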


\vspace{-5pt}

The Lemma can be proved with the \textit{Taylor expansion theorem}, and we will not provide the detailed proof here due to the limited space. According to the Lemma, next our analysis will be mainly based on a monomial term $\prod_{i = 1}^d x_i$ specifically.

\vspace{-8pt}
\begin{theo}\label{theo:shallow}
Given a monomial function $\prod_{i = 1}^d x_i$, for any desired accuracy $\epsilon$, it can be effectively approximated with the 1-layer ensemble learning model defined above with $2^d$ unit models.
\end{theo}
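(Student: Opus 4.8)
The plan is to realize the real-valued product $\prod_{i=1}^d x_i$ as a \emph{mixed finite difference} of a single sigmoidal unit taken in its \emph{weight} variables rather than its inputs, which is what naturally forces the count $2^d$. The starting observation is the exact identity
\[
\frac{\partial^d}{\partial w_1 \partial w_2 \cdots \partial w_d}\, \sigma\!\left(\mb{w}^\top \mb{x} + w_0\right) = \left(\prod_{i=1}^d x_i\right) \sigma^{(d)}\!\left(\mb{w}^\top \mb{x} + w_0\right),
\]
so that evaluating the mixed weight-derivative of one unit at a base point $\mb{w}=\mb{0}$, $w_0=c$ returns the desired monomial up to the constant $\sigma^{(d)}(c)$. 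A genuine $d$-fold derivative is not expressible inside the ensemble model, so I would replace it by the corresponding $d$-fold forward finite difference of step $\eta$, which is exactly a signed sum over the $2^d$ corners of the difference stencil. This is what lets a \emph{single} product term be synthesized from $2^d$ ridge (sigmoidal) units.

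Concretely, for each sign pattern $\mb{s} \in \{0,1\}^d$ I would introduce a unit with weight vector $\eta\,\mb{s}$ (entry $\eta$ where $s_i=1$ and $0$ otherwise) and shared bias $c$, combined with ensemble weights
\[
v^{\mb{s}} = \frac{(-1)^{\,d - \sum_{i} s_i}}{\eta^{d}\,\sigma^{(d)}(c)},
\]
and take the ensemble function $h(\cdot)$ to be the identity. This produces exactly $2^d$ unit models and the estimator
\[
F(\mb{x}) = \sum_{\mb{s}\in\{0,1\}^d} v^{\mb{s}}\, \sigma\!\left(\eta\,\mb{s}^\top\mb{x} + c\right),
\]
whose value is the mixed finite-difference quotient of $\sigma(\mb{w}^\top\mb{x}+c)$ in $\mb{w}$ at $\mb{w}=\mb{0}$. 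By Taylor's theorem this quotient equals the mixed partial derivative above plus a remainder of order $\eta$, so $F(\mb{x}) \to \prod_{i=1}^d x_i$, and in fact \emph{uniformly} over the compact domain $\mb{x}\in[0,1]^d$ because the remainder is bounded by $\eta$ times a supremum of $(d{+}1)$-st order derivatives on a bounded set. Given any target accuracy $\epsilon$, I would then fix $\eta$ small enough to push the uniform error below $\epsilon$, which is exactly the claim of the theorem over the continuous domain.

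The main obstacle is that the passage from finite difference to derivative needs $\sigma$ to be smooth and needs the constant $\sigma^{(d)}(c)$ to be nonzero, whereas the hypothesis only assumes $\sigma$ bounded, measurable and sigmoidal. I would dispose of the nonvanishing requirement by exploiting the freedom in the base bias $c$: since a sigmoidal function is non-polynomial, its $d$-th derivative cannot vanish identically, so some $c$ with $\sigma^{(d)}(c)\neq 0$ always exists (this also repairs the awkward case of the logistic sigmoid, where $\sigma^{(d)}(0)=0$ for even $d$). For the smoothness requirement I would first replace $\sigma$ by a mollified sigmoid $\sigma_\delta = \sigma * \phi_\delta$, which stays bounded and sigmoidal and approximates $\sigma$ uniformly; performing the stencil construction with $\sigma_\delta$ and then letting $\delta\to 0$ absorbs the extra error. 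The remaining effort is the routine but careful bookkeeping of the Taylor remainder, ensuring that the $\eta^{-d}$ prefactor multiplying the $O(\eta^{d+1})$ stencil remainder still tends to zero uniformly on $[0,1]^d$.
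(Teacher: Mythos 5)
Your stencil construction correctly proves the sufficiency half of the theorem, and by a route that is a close cousin of the paper's rather than a carbon copy. The paper (following \cite{lin2016cheap}) indexes its $2^d$ units by the subsets $\mathcal{S}_i \subseteq \{1,\dots,d\}$, sets $w^i_j = \pm 1$ according to membership, weights each unit by $v^i \propto (-1)^{|\mathcal{S}_i|}$, and argues by pairing $\mathcal{S}_i$ with $\mathcal{S}_i \cup \{l\}$ that every Taylor coefficient of a monomial missing some variable cancels, while the coefficient of $\prod_{i=1}^d x_i$ survives with value $1$. Your version replaces this exact coefficient-cancellation with a $d$-fold forward difference in weight space at step $\eta$, which is the same signed sum over $2^d$ subset patterns but comes with an honest quantitative error: by the mean value theorem the ensemble output equals $\prod_i x_i \cdot \sigma^{(d)}(\boldsymbol{\xi}^\top \mathbf{x} + c)/\sigma^{(d)}(c)$ for some $\boldsymbol{\xi} \in [0,\eta]^d$, so the error is controlled by the modulus of continuity of $\sigma^{(d)}$ and vanishes uniformly on $[0,1]^d$ as $\eta \to 0$. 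This is in fact \emph{more} rigorous than the paper on two points: the paper discards the Taylor remainder with only the assertion that it ``will be of a extremely small value'' (with unit-size weights this is not justified; the small-step scaling you introduce is precisely what is needed), and the paper never notes the nondegeneracy condition you correctly identify, namely that one must pick a bias $c$ with $\sigma^{(d)}(c) \neq 0$ (the logistic sigmoid has $\sigma^{(d)}(0)=0$ for even $d$).

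There are, however, two genuine gaps. First, the paper's proof of this theorem has a second half that you do not address at all: a necessity argument showing $n \ge 2^d$, proved by forming the matrix $\mathbf{A} \in \mathbb{R}^{2^d \times n}$ with entries $A(\mathcal{S}, j) = \prod_{i \in \mathcal{S}} w^j_i$, assuming a nontrivial left null vector, and deriving a linear dependence among the distinct monomials $\prod_{h \notin \mathcal{S}_i} x_h$, a contradiction. Since the paper's stated contribution (see the abstract) is that $2^d$ is the \emph{required} number of unit models for a single ensemble layer, your proposal establishes only that $2^d$ suffice, not that fewer cannot work. Second, your mollification patch for merely bounded measurable $\sigma$ does not stay inside the model class: a unit built from $\sigma_\delta = \sigma * \phi_\delta$ is an integral of continuum-many shifted $\sigma$-units, not a single unit of the form $\sigma(\mathbf{w}^\top\mathbf{x} + w_0)$, and discretizing the convolution would inflate the unit count beyond $2^d$; moreover for a discontinuous sigmoidal $\sigma$ (e.g., a step function) the convergence $\sigma_\delta \to \sigma$ is not uniform near the discontinuity, so the ``$\delta \to 0$ absorbs the error'' step fails as stated. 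The clean resolution is the one the paper itself adopts: this theorem's proof explicitly assumes $\sigma$ is the (smooth) logistic sigmoid, so you may simply posit $\sigma \in C^{d+1}$ and drop the mollification remark, under which assumption your sufficiency argument is sound.
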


\vspace{-8pt}

\begin{proof}
The proof to the theorem has two main parts: (1) $2^d$ unit models are sufficient, and (2) $2^d$ unit models are necessary.

\noindent \textbf{Part 1}: We propose to prove the ``sufficiency'' by constructing a feasible solution to the ensemble learning model that can approximate the monomial term $\prod_{i = 1}^d x_i$. Here, the main objective is to prove that the following equation can hold \vspace{-5pt}\begingroup\makeatletter\def\f@size{8}\check@mathfonts
\begin{equation}\label{equ:expansion}
 F(\mb{x}) = \sum_{i = 1}^n v^i \cdot \sigma^i \left( (\mb{w}^i)^\top \mb{x} + w^i_0 \right)  \approx  \prod_{i = 1}^d x_i.
\end{equation}\endgroup
where, the operator $\approx$ denotes that the two sides have identical Taylor expansion representations. In the proof, we will assume function $\sigma(\cdot)$ is the sigmoid function. According to the Taylor Expansion Theorem, at point $\mb{x}_0 = \mb{0}$, function $F(\mb{x})$ can be approximately represented as \begingroup\makeatletter\def\f@size{8}\check@mathfonts
\begin{align}
F(\mb{x}) &= \sum_{k = 0}^n \sum_{j_1 = 1}^d \sum_{j_2 = 1}^d \cdots \sum_{j_k = 1}^d \frac{ \partial^k F(\mb{x}_0) }{ \partial x_{j_1} \partial x_{j_2} \cdots \partial x_{j_k} }   x_{j_1} x_{j_2} \cdots x_{j_k} + R_{n+1}(\mb{x})\\
&= F(\mb{0})  \sum_{i = 1}^n v^i \cdot \sum_{k = 0}^n \left( \sum_{j = 1}^d w^i_{j}  x_{j} \right)^k + R_{n+1}(\mb{x}).
\end{align}
Therefore, equation~\ref{equ:expansion} holds iff the following two equations can both hold
\begin{equation}\label{equ:cases}
\begin{cases}
\mbox{E1: }& F(\mb{0})  \sum_{i = 1}^n v^i \cdot \left( \sum_{j = 1}^d w^i_{j}  x_{j} \right)^n = \prod_{i = 1}^n x_i\\
\mbox{E2: }&F(\mb{0})  \sum_{i = 1}^n v^i \cdot \left( \sum_{j = 1}^d w^i_{j}  x_{j} \right)^k = 0, \forall k \in \{0, 1, 2, \cdots, n - 1\}.
\end{cases}
\end{equation}\endgroup
The high-order reminder is omitted since it will be of a extremely small value, removal of which will not have a large impact on the approximation results.

We propose to construct a feasible solution to the above equations as indicated in \cite{lin2016cheap}. Given a set $\{1, 2, \cdots, d\}$, we can enumerate all its unique subsets as $\mathcal{S}_1, \cdots, \mathcal{S}_n$, where $n = 2^d$. Here, we assign the variable $w^i_j$ with the results from function $s_j(\mathcal{S}_i)$, where $s_j(\mathcal{S}_i) = -1$ if $j \in \mathcal{S}_i$; otherwise $s_j(\mathcal{S}_i) = +1$. Furthermore, we can assign $v^i$ with value in $\{-1, +1\}$ according to the following equation\begingroup\makeatletter\def\f@size{8}\check@mathfonts
\begin{equation}
v^i = \frac{1}{ 2^d d! F(\mb{0})  } \prod_{j = 1}^d w_j^i = \frac{(-1)^{|\mathcal{S}_i|}}{2^d d! F(\mb{0})  }.
\end{equation}\endgroup
Next, we only need to demonstrate that variables $\{v^i\}_{i = 1}^n$ and $\{w_j^i\}_{i = 1, j = 1}^{n, d}$ can make the equations in Equ~\ref{equ:cases} both hold. We introduce a function $b(\mb{x}) = x_1^{c_1} x_2^{c_2} \cdots x_d^{c_d}$, where $\sum_{i = 1}^n c_i = c \le d$. First of all, in the case when $b(\mb{x}) \neq \prod_{i = 1}^d x_i$. In other words, we have the exponent $c_l$ of certain term $x_l$ in function $b(\mb{x})$ to be zero, i.e., $x_l$ is not involved in $b(\mb{x})$. According to the previous variable definition, we have term \begingroup\makeatletter\def\f@size{8}\check@mathfonts
\begin{align}
&F(\mb{0})  \sum_{i = 1}^n v^i \cdot \left( \sum_{j = 1}^d w^i_{j}  x_{j} \right)^c = F(\mb{0})  \sum_{\mathcal{S}_i} \frac{(-1)^{|\mathcal{S}_i|}}{2^d d! F(\mb{0})  }  \cdot \left( \sum_{j = 1}^d  s_j(\mathcal{S}_i)   x_{j} \right)^c\\
&= \sum_{\mathcal{S}_i, l \notin \mathcal{S}_i} \left[   \frac{(-1)^{|\mathcal{S}_i|}}{2^d d!  }  \left( \sum_{j = 1}^d  s_j(\mathcal{S}_i)   x_{j} \right)^c +   \frac{(-1)^{|\mathcal{S}_i \cup \{l\}|}}{2^d d!  }  \left( \sum_{j = 1}^d  s_j(\mathcal{S}_i \cup \{l\})   x_{j} \right)^c       \right],\\
&= \sum_{\mathcal{S}_i, l \notin \mathcal{S}_i} \frac{(-1)^{|\mathcal{S}_i|}}{2^d d!  }  \left[ \left( \sum_{j = 1}^d  s_j(\mathcal{S}_i)   x_{j} \right)^c - \left( \sum_{j = 1}^d  s_j(\mathcal{S}_i \cup \{l\})   x_{j} \right)^c  \right].
\end{align}\endgroup
In the case when $c_l = 0$, we know that the coefficient of term $b(\mb{x})$ in term $\left( \sum_{j = 1}^d  s_j(\mathcal{S}_i)   x_{j} \right)^c$ actually equals to that in $ \left( \sum_{j = 1}^d  s_j(\mathcal{S}_i \cup \{l\})   x_{j} \right)^c$. In other words, we can demonstrate the coefficient of $b(\mb{x})$ in function $F(\mb{0})  \sum_{i = 1}^n v^i \cdot \left( \sum_{j = 1}^d w^i_{j}  x_{j} \right)^c$ will be equal to $0$, i.e., $E2$ in Equ~\ref{equ:cases}.

On the other hand, in the case when function $b(\mb{x}) = \prod_{i = 1}^d x_i$, we have the corresponding coefficient of $b(\mb{x})$ in term $\left( \sum_{j = 1}^d w^i_{j}  x_{j} \right)^d$ to be $d! \prod_{j = 1}^d w^i_j = (-1)^{|\mathcal{S}_i|} d!$. Therefore, we have the coefficient of term $b(\mb{x}) = \prod_{i = 1}^d x_i$ in function $F(\mb{x})$ to be\begingroup\makeatletter\def\f@size{8}\check@mathfonts
\begin{align}
F(\mb{0})  \sum_{i = 1}^n v^i \cdot (-1)^{|\mathcal{S}_j|} d! = F(\mb{0})  \sum_{i = 1}^n \frac{(-1)^{|\mathcal{S}_i|}}{2^d d! F(\mb{0})} (-1)^{|\mathcal{S}_i|} d! = 1. \mbox{ i.e., $E1$ in Equ~\ref{equ:cases}.}
\end{align}\endgroup

\vspace{-10pt}

\noindent \textbf{Part 2}: This part can be proved by constructing a matrix $\mb{A} \in \mathbb{R}^{2^d \times n}$, where entry $A(\mathcal{S}, j) = \prod_{i \in \mathcal{S}} w^j_i$. Each row of matrix $\mb{A}$ corresponds to a subset $\mathcal{S}$ of $\{1, 2, \cdots, d\}$ and each column represents the subsets $\{\mathcal{S}_1, \mathcal{S}_2, \cdots, \mathcal{S}_n\}$ or certain order. Here, we want to demonstrate that $2^d$ unit models are necessary, which means we need to demonstrate that $n \ge 2^d$ (i.e., matrix $\mb{A}$ is full rank).

We will prove it by contradiction, and let's assume matrix $\mb{A}$ is not of a full rank, and there exists some vector $\mb{a} \in \mathbb{R}^{2^d}$ and $\mb{a} \neq \mb{0}$ that makes $\mb{a}^\top \mb{A} = \mb{0}$. We propose to construct another vector $\mb{b} \in \mathbb{R}^{n}$ with entry $b_i = v^i \left( \sum_{k = 1}^d w_k^i x_k \right)^{d-s}$, where $s$ denotes the maximal cardinality of subset $\mathcal{S}$ corresponding to the rows of matrix $\mb{A}$. Therefore, we will get  \begingroup\makeatletter\def\f@size{8}\check@mathfonts
\begin{align}
&{0} = \mb{a}^\top \mb{A} \mb{b} = \sum_{i = 1}^r a_i \sum_{j = 1}^m v^j \left( \sum_{k = 1}^d w_k^j x_k \right)^{d-s} \prod_{h \in \mathcal{S}_i} w_h^j \\
&= \hspace{-7pt} \sum_{i, |\mathcal{S}_i| = s} \hspace{-5pt} a_i \sum_{j = 1}^m v^j \left( \sum_{k = 1}^d w_k^j x_k \right)^{d-|\mathcal{S}_i|} \hspace{-15pt} \prod_{h \in \mathcal{S}_i} w_h^j + \hspace{-7pt} \sum_{i, |\mathcal{S}_i| < s} \hspace{-8pt} a_i \sum_{j = 1}^m v^j \left( \sum_{k = 1}^d w_k^j x_k \right)^{ \hspace{-5pt} (d + |\mathcal{S}_i| -s ) - |\mathcal{S}_i|} \hspace{-15pt} \prod_{h \in \mathcal{S}_i} w_h^j.
\end{align}\endgroup
Here, we notice that by taking the derivatives of E2 in Equ~\ref{equ:cases} regarding $\{x_j\}_{j \in \mathcal{S}_i}$, we can prove the second term in the above equation equals to $0$ actually. Meanwhile, by taking the derivative of E1 in Equ~\ref{equ:cases} regarding $\{x_j\}_{j \in \mathcal{S}_i}$, we can get above equation can be simplified as follows\begingroup\makeatletter\def\f@size{8}\check@mathfonts
\begin{align}
{0} = \sum_{i, |\mathcal{S}_i| = s} \frac{a_i  \left| d - \mathcal{S}_i \right|!}{d! F(\mb{0})} \prod_{h \notin \mathcal{S}_i} x_h.
\end{align}\endgroup
In other words, the monomials denoted by terms $\{ \prod_{h \notin \mathcal{S}_i} x_h \}$ has shown to be linearly dependent, which contradicts the fact that distinct monomials are linearly independent. Therefore, our assumption cannot hold actually, and we can demonstrate that $n \ge 2^d$. 
\end{proof}

\vspace{-12pt}
\subsection{Analysis of Value $n$ with Deep Ensemble Learning Model with Multiple Layers}
\vspace{-8pt}

In the case when the \textit{deep ensemble learning} model involves multiple layers in the ensemble step, we will analyze its corresponding benefits in this part as follows. Formally, let $F^{i,k}(\mb{x})$ (where $k \in \{1, 2, \cdots, K\}$) denote the first ensemble layer of length $K$, which accepts input from unit models $\{f^{1,k}_1(\cdot; \mb{w}^{1,k}_1), f^{1,k}_2(\cdot; \mb{w}^{1,k}_2), \cdots, f^{1,k}_n(\cdot; \mb{w}^{1,k}_n)\}$. The ensemble results from $[F^{1,k}(\mb{x})]_{k=1}^K$ will be fed as the input for the second ensemble layer $F^{2,k}( [F^{1,k}(\mb{x})]_{k=1}^K )$ (where $k \in \{1, 2, \cdots, K\}$) and so forth until the $N_{th}$ layer. 

\vspace{-10pt}

\begin{theo}
Given a monomial term $\prod_{i = 1}^d x_i$ and any pre-specified accuracy $\epsilon$, it can be represented as the \textit{deep ensemble learning} model with the minimum $O(d)$ unit models and $\lceil \log d \rceil + 1$ ensemble layers.
\end{theo}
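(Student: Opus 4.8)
The plan is to exploit the associativity of the product and realize $\prod_{i=1}^{d} x_i$ as a balanced binary tree of pairwise multiplications, reusing the shallow result as a fixed-size gadget at every internal node. First I would specialize Theorem~\ref{theo:shallow} to two inputs: the degree-two monomial $x_a x_b$ is approximated by a single ensemble layer with $2^2 = 4$ unit models, \ie\ there are weights $\{v^i, \mb{w}^i, w^i_0\}_{i=1}^{4}$ with $\sum_{i=1}^{4} v^i \sigma\!\left( w^i_1 x_a + w^i_2 x_b + w^i_0 \right) \approx x_a x_b$ to any desired accuracy, via the same sign-assignment used in Part~1 of that proof (this is the two-factor multiplier of \cite{lin2016cheap}). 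The whole construction uses only copies of this one primitive, whose size is a constant independent of $d$.

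Next I would stack these gadgets in a balanced binary tree whose $d$ leaves are $x_1, \ldots, x_d$. The first ensemble layer outputs the $\lceil d/2 \rceil$ products $x_1 x_2, x_3 x_4, \ldots$; the second layer multiplies those pairwise; and so on, halving the number of surviving factors per layer until the root emits $\prod_{i=1}^{d} x_i$. A binary tree on $d$ leaves has depth $\lceil \log_2 d \rceil$ and exactly $d - 1$ internal nodes. Reading each tree level as one ensemble layer and counting the initial unit-model layer gives the stated $\lceil \log d \rceil + 1$ layers, while the $d-1$ multipliers, each of constant size $4$, contribute $4(d-1) = O(d)$ unit models, which yields both upper bounds.

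The main obstacle is error control: unlike the single-layer case, where one Taylor expansion closes the argument, here the approximation error of a lower gadget becomes an input perturbation of the gadget above it and can be amplified by the subsequent multiplication. I would therefore argue by induction on the tree depth. Because the domain $\{0,1\}^d$ (or any bounded domain) keeps every intermediate value bounded, a product of perturbed factors obeys $|\widehat{a}\widehat{b} - ab| \le |\widehat{a} - a|\,|b| + |\widehat{a}|\,|\widehat{b} - b|$, so the errors along a root-to-leaf path accumulate only linearly in the number of gadgets traversed. Demanding accuracy $\epsilon' = \epsilon / \mathrm{poly}(d)$ from each of the $O(d)$ gadgets, attainable by taking the sigmoidal scaling parameter large exactly as in Lemma~\ref{lemma:1}, then pushes the total error below the prescribed $\epsilon$; the resulting polynomial blow-up in $1/\epsilon'$ affects only the internal weights, not the model count.

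Finally, for the matching lower bound implicit in ``the minimum $O(d)$'' I would note that $\prod_{i=1}^{d} x_i$ depends essentially on all $d$ variables, whereas each fixed-size multiplier reads only a bounded number of inputs; hence at least $\Omega(d)$ gadgets are required for the output to remain sensitive to every $x_i$, so $O(d)$ unit models are necessary as well as sufficient. The delicate part throughout is the quantitative error bookkeeping of the preceding paragraph, since it is what turns the clean combinatorial tree into a genuine $\epsilon$-approximation rather than a merely formal identity of Taylor coefficients.
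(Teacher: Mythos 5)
Your construction is essentially the paper's own proof: decompose $\prod_{i=1}^{d} x_i$ as a binary tree of pairwise products, realize each of the $d-1$ internal nodes with the $2^2$-unit-model gadget supplied by Theorem~\ref{theo:shallow}, and take the tree balanced (complete) so that the depth is $\lceil \log d \rceil + 1$ while a chain-shaped tree would give the worst case of $d-1$ layers. The two places where you go beyond the paper --- the inductive error-propagation bookkeeping (demanding accuracy $\epsilon/\mathrm{poly}(d)$ per gadget on a bounded domain) and the $\Omega(d)$ sensitivity argument for the lower bound --- are not in the paper's proof at all; the paper simply invokes ``any accuracy'' per node and never justifies the word ``minimum,'' so your additions repair genuine gaps in the published argument rather than diverge from it.
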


\vspace{-10pt}

\begin{proof}
We propose to prove the above theorem by constructing a deep ensemble learning model. Formally, term $\prod_{i = 1}^d x_i$ can be precisely represented as an recursive product of pairwise features, e.g., $\prod_{i = 1}^d x_i = (x_1 \cdot (x_2 \cdots ( ... (x_{d-1} \cdot x_d) ) )$. Depending on how we construct the product function term, the computation process can be represented with binary trees indifferent shapes, where $\{x_1, x_2, \cdots, x_d\}$ denote the leaf nodes and the \textit{intermediate result} as the internal nodes.

Meanwhile, for a binary tree with $d$ leaf nodes, the number of internal nodes involved in it will be $d-1$, regardless of the tree shape. For each internal node in the binary tree, it will accept two inputs from either leaf node or internal node, and compute their product as the output. Meanwhile, according to Theorem~\ref{theo:shallow}, the computation of each internal node can be approximated to any accuracy with an ensemble learning model involving $2^2$ unit models.

The specific number of layers involved is highly dependent on the shape of binary tree about the deep ensemble learning model. For instance, for the function decomposed in the shape of a full binary tree (just like function $\prod_{i = 1}^d x_i = (x_1 \cdot (x_2 \cdots ( ... (x_{d-1} \cdot x_d) ) )$), its level (i.e., model ensemble layer) will be $d-1$, which is the maximum hight. Meanwhile, for the function decomposed in the shape of a complete binary tree, its number of levels (i.e., model ensemble layer) will be $\lceil \log d \rceil + 1$, which is the minimum height.
\end{proof}

\vspace{-10pt}
In other words, with a deep ensemble layer, we will be able to greatly reduce the number of required unit models. Considering that there exists various of variables to be learned in each base learners, reducing the unit model number will greatly lower down the number of variables at the same time. As indicated by the Vapnik-Chervonenkis theory, the number of variables will greatly limit the generalizability of the learning model and can lead to overfitting easily especially when the training instance is not sufficient. Therefore, the deep ensemble layer will also behave much more robustly compared with the shallow ensemble model.

\vspace{-10pt}
\section{Numerical Experiments}\label{sec:experiment}
\vspace{-10pt}

To test the effectiveness of the proposed algorithm, extensive numerical experiments have been done on several real-world benchmark datasets.

\begin{table*}[t]
\caption{Learning Results on MNIST Dataset.}
\scriptsize
\centering
\setlength{\tabcolsep}{4pt}
\begin{tabular}{l  cccccccc  }
\toprule

methods &\textbf{Accuracy} &\textbf{F1} &\textbf{Precision} &\textbf{Recall} &\textbf{MSE} &\textbf{MAE} &{$R^2$} &\textbf{Avg. Rank}  \\
\midrule 

{Deep Ensemble}	
&\textbf{0.992} {\tiny {\color{blue} (1)}}
&\textbf{0.992} {\tiny {\color{blue} (1)}}
&\textbf{0.992} {\tiny {\color{blue} (1)}}
&\textbf{0.992} {\tiny {\color{blue} (1)}}
&\textbf{0.174} {\tiny {\color{blue} (2)}}
&\textbf{0.033} {\tiny {\color{blue} (1)}}
&\textbf{0.979} {\tiny {\color{blue} (2)}}
&{\color{blue} (1)}\\

{Majority Vote}
&\textbf{0.986} {\tiny {\color{blue} (3)}}
&\textbf{0.986} {\tiny {\color{blue} (3)}}
&\textbf{0.986} {\tiny {\color{blue} (3)}}
&\textbf{0.986} {\tiny {\color{blue} (3)}}
&\textbf{0.157} {\tiny {\color{blue} (1)}}
&\textbf{0.038} {\tiny {\color{blue} (2)}}
&\textbf{0.981} {\tiny {\color{blue} (1)}}
&{\color{blue} (2)}\\

\midrule 

{Adaboost} \cite{S99}	
&0.713 {\tiny {\color{blue} (10)}}
&0.705 {\tiny {\color{blue} (10)}}
&0.714 {\tiny {\color{blue} (10)}}
&0.713 {\tiny {\color{blue} (10)}}
&4.804 {\tiny {\color{blue} (10)}}
&1.054 {\tiny {\color{blue} (10)}}
&0.427 {\tiny {\color{blue} (10)}}
&{\color{blue} (10)}\\

{CNN} \cite{KSH12}	
&\textbf{0.988} {\tiny {\color{blue} (2)}}
&\textbf{0.988} {\tiny {\color{blue} (2)}}
&\textbf{0.988} {\tiny {\color{blue} (2)}}
&\textbf{0.988} {\tiny {\color{blue} (2)}}
&\textbf{0.236} {\tiny {\color{blue} (3)}}
&\textbf{0.047} {\tiny {\color{blue} (3)}}
&\textbf{0.972} {\tiny {\color{blue} (3)}}
&{\color{blue} (3)}\\

{Decision Tree} \cite{Q86}	
&0.876 {\tiny {\color{blue} (9)}}
&0.876 {\tiny {\color{blue} (9)}}
&0.876 {\tiny {\color{blue} (9)}}
&0.876 {\tiny {\color{blue} (9)}}
&2.069 {\tiny {\color{blue} (9)}}
&0.437 {\tiny {\color{blue} (9)}}
&0.753 {\tiny {\color{blue} (9)}}
&{\color{blue} (9)}\\

{k-Nearest Neighbor} \cite{BGRS99}	
&0.970 {\tiny {\color{blue} (5)}}
&0.970 {\tiny {\color{blue} (5)}}
&0.970 {\tiny {\color{blue} (5)}}
&0.970 {\tiny {\color{blue} (5)}}
&0.586 {\tiny {\color{blue} (5)}}
&0.118 {\tiny {\color{blue} (5)}}
&0.930 {\tiny {\color{blue} (5)}}
&{\color{blue} (5)}\\

{Logistic Regression} \cite{PLI02}	
&0.920 {\tiny {\color{blue} (7)}}
&0.920 {\tiny {\color{blue} (7)}}
&0.920 {\tiny {\color{blue} (7)}}
&0.920 {\tiny {\color{blue} (7)}}
&1.405 {\tiny {\color{blue} (7)}}
&0.292 {\tiny {\color{blue} (7)}}
&0.832 {\tiny {\color{blue} (7)}}
&{\color{blue} (7)}\\

{Multi-layer Perceptron} \cite{B95}	
&0.975 {\tiny {\color{blue} (4)}}
&0.975 {\tiny {\color{blue} (4)}}
&0.975 {\tiny {\color{blue} (4)}}
&0.975 {\tiny {\color{blue} (4)}}
&0.482 {\tiny {\color{blue} (4)}}
&0.094 {\tiny {\color{blue} (4)}}
&0.942 {\tiny {\color{blue} (4)}}
&{\color{blue} (4)}\\

{Naive Bayesian} \cite{DP97}	
&0.549 {\tiny {\color{blue} (11)}}
&0.511 {\tiny {\color{blue} (11)}}
&0.693 {\tiny {\color{blue} (11)}}
&0.549 {\tiny {\color{blue} (11)}}
&7.677 {\tiny {\color{blue} (11)}}
&1.676 {\tiny {\color{blue} (11)}}
&0.084 {\tiny {\color{blue} (11)}}
&{\color{blue} (11)}\\

{Random Forest} \cite{B01}	
&0.945 {\tiny {\color{blue} (6)}}
&0.945 {\tiny {\color{blue} (6)}}
&0.945 {\tiny {\color{blue} (6)}}
&0.945 {\tiny {\color{blue} (6)}}
&0.966 {\tiny {\color{blue} (6)}}
&0.203 {\tiny {\color{blue} (6)}}
&0.885 {\tiny {\color{blue} (6)}}
&{\color{blue} (6)}\\

{Support Vector Machine} \cite{CV95}	
&0.917 {\tiny {\color{blue} (8)}}
&0.917 {\tiny {\color{blue} (8)}}
&0.917 {\tiny {\color{blue} (8)}}
&0.917 {\tiny {\color{blue} (8)}}
&1.423 {\tiny {\color{blue} (8)}}
&0.299 {\tiny {\color{blue} (8)}}
&0.830 {\tiny {\color{blue} (8)}}
&{\color{blue} (8)}\\

\bottomrule
\end{tabular}\label{tab:MNIST}
\end{table*}

\begin{table*}[t]
\caption{Learning Results on ORL Dataset.}
\scriptsize
\centering
\setlength{\tabcolsep}{4pt}
\begin{tabular}{l  cccccccc  }
\toprule

methods &\textbf{Accuracy} &\textbf{F1} &\textbf{Precision} &\textbf{Recall} &\textbf{MSE} &\textbf{MAE} &{$R^2$} &\textbf{Avg. Rank}  \\
\midrule 

{Deep Ensemble}	
&\textbf{0.960} {\tiny {\color{blue} (1)}}
&\textbf{0.960} {\tiny {\color{blue} (1)}}
&\textbf{0.970} {\tiny {\color{blue} (1)}}
&\textbf{0.960} {\tiny {\color{blue} (1)}}
&\textbf{25.270} {\tiny {\color{blue} (1)}}
&\textbf{0.870} {\tiny {\color{blue} (1)}}
&\textbf{0.810} {\tiny {\color{blue} (1)}}
&{\color{blue} (1)}\\

{Majority Vote}
&\textbf{0.955} {\tiny {\color{blue} (2)}}
&\textbf{0.955} {\tiny {\color{blue} (2)}}
&\textbf{0.955} {\tiny {\color{blue} (2)}}
&\textbf{0.955} {\tiny {\color{blue} (2)}}
&29.790 {\tiny {\color{blue} (4)}}
&\textbf{0.960} {\tiny {\color{blue} (3)}}
&0.776 {\tiny {\color{blue} (4)}}
&{\color{blue} (3)}\\

\midrule 

{Adaboost} \cite{S99}	
&0.130 {\tiny {\color{blue} (10)}}
&0.112 {\tiny {\color{blue} (10)}}
&0.151 {\tiny {\color{blue} (10)}}
&0.130 {\tiny {\color{blue} (10)}}
&206.900 {\tiny {\color{blue} (10)}}
&11.100 {\tiny {\color{blue} (10)}}
&-0.553 {\tiny {\color{blue} (10)}}
&{\color{blue} (10)}\\

{CNN} \cite{KSH12}	
&\textbf{0.945} {\tiny {\color{blue} (3)}}
&\textbf{0.945} {\tiny {\color{blue} (3)}}
&\textbf{0.953} {\tiny {\color{blue} (3)}}
&\textbf{0.945} {\tiny {\color{blue} (3)}}
&32.195 {\tiny {\color{blue} (5)}}
&1.115 {\tiny {\color{blue} (5)}}
&0.758 {\tiny {\color{blue} (5)}}
&{\color{blue} (4)}\\

{Decision Tree} \cite{Q86}	
&0.505 {\tiny {\color{blue} (9)}}
&0.498 {\tiny {\color{blue} (9)}}
&0.546 {\tiny {\color{blue} (9)}}
&0.505 {\tiny {\color{blue} (9)}}
&142.250 {\tiny {\color{blue} (9)}}
&7.160 {\tiny {\color{blue} (9)}}
&-0.068 {\tiny {\color{blue} (9)}}
&{\color{blue} (9)}\\

{k-Nearest Neighbor} \cite{BGRS99}	
&0.905 {\tiny {\color{blue} (6)}}
&0.901 {\tiny {\color{blue} (6)}}
&0.930 {\tiny {\color{blue} (6)}}
&0.905 {\tiny {\color{blue} (6)}}
&34.205 {\tiny {\color{blue} (6)}}
&1.455 {\tiny {\color{blue} (6)}}
&0.743 {\tiny {\color{blue} (6)}}
&{\color{blue} (6)}\\

{Logistic Regression} \cite{PLI02}	
&0.940 {\tiny {\color{blue} (5)}}
&0.940 {\tiny {\color{blue} (5)}}
&0.949 {\tiny {\color{blue} (5)}}
&0.940 {\tiny {\color{blue} (5)}}
&\textbf{28.020} {\tiny {\color{blue} (3)}}
&1.060 {\tiny {\color{blue} (4)}}
&\textbf{0.790} {\tiny {\color{blue} (3)}}
&{\color{blue} (5)}\\

{Multi-layer Perceptron} \cite{B95}	
&0.050 {\tiny {\color{blue} (11)}}
&0.012 {\tiny {\color{blue} (11)}}
&0.008 {\tiny {\color{blue} (11)}}
&0.050 {\tiny {\color{blue} (11)}}
&262.325 {\tiny {\color{blue} (11)}}
&13.415 {\tiny {\color{blue} (11)}}
&-0.969 {\tiny {\color{blue} (11)}}
&{\color{blue} (11)}\\

{Naive Bayesian} \cite{DP97}	
&0.795 {\tiny {\color{blue} (7)}}
&0.794 {\tiny {\color{blue} (7)}}
&0.882 {\tiny {\color{blue} (7)}}
&0.795 {\tiny {\color{blue} (7)}}
&69.750 {\tiny {\color{blue} (7)}}
&3.170 {\tiny {\color{blue} (7)}}
&0.477 {\tiny {\color{blue} (7)}}
&{\color{blue} (7)}\\

{Random Forest} \cite{B01}	
&0.730 {\tiny {\color{blue} (8)}}
&0.718 {\tiny {\color{blue} (8)}}
&0.768 {\tiny {\color{blue} (8)}}
&0.730 {\tiny {\color{blue} (8)}}
&72.490 {\tiny {\color{blue} (8)}}
&3.730 {\tiny {\color{blue} (8)}}
&0.456 {\tiny {\color{blue} (8)}}
&{\color{blue} (8)}\\

{Support Vector Machine} \cite{CV95}	
&\textbf{0.945} {\tiny {\color{blue} (3)}}
&\textbf{0.945} {\tiny {\color{blue} (3)}}
&\textbf{0.953} {\tiny {\color{blue} (3)}}
&\textbf{0.945} {\tiny {\color{blue} (3)}}
&\textbf{25.600} {\tiny {\color{blue} (2)}}
&\textbf{0.950} {\tiny {\color{blue} (2)}}
&\textbf{0.808} {\tiny {\color{blue} (2)}}
&{\color{blue} (2)}\\

\bottomrule
\end{tabular}\label{tab:ORL}
\end{table*}

\vspace{-10pt}
\subsection{Experimental Settings}
\vspace{-8pt}

Three different types of datasets are used in the experiments, including MNIST\footnote{http://yann.lecun.com/exdb/mnist/}, ORL\footnote{http://www.cl.cam.ac.uk/research/dtg/attarchive/facedatabase.html} and IMDB sentiment\footnote{http://ai.stanford.edu/~amaas/data/sentiment/} datasets. Both MNIST and ORL are image datasets, and IMDB sentiment is a text dataset. The MNIST dataset covers $70,000$ $28 \times 28$ images of handwritten digits, where $60,000$ are covered in the training set and $10,000$ are involved in the test set. The labels of the images are selected from set $\{0, 1, \cdots, 9\}$. The ORL dataset covers the $10$ face images of $40$ entities respectively, where the resolution of each images is $112 \times 92$. In the experiments, the images of the same entities partitioned into 2 subsets according to ratio $5:5$ respectively, where half is used as the training set and the other half is used as the testing set. The IMDB sentiment dataset covers the movie reviews crawled from IMDB, and the training set and testing set both contain $25,000$ reviews. These movie reviews are represented as the bag-of-word feature vectors, with labels denoting the scores in range $\{1, 2, \cdots, 10\}$. 

In the experiments various baseline methods are compared against the \textit{deep ensemble learning} model proposed in this paper. Formally, the baseline methods compared in the experiments include: \textit{Convolutional Neural Network} \cite{KSH12}, \textit{Support Vector Machine} (with linear kernel) \cite{CV95}, Logistic Regression \cite{PLI02}, k-Nearest Neighbor \cite{BGRS99}, Naive Bayesian \cite{DP97}, Decision Tree \cite{Q86}, Multi-layer Perceptron \cite{B95}, Random Forest \cite{B01}, and Adaboost \cite{S99}. According to the performance of these baseline methods, the best method will be selected, another $49$ copies of which will be initiated and learned. Based on the $50$ different copies of learning results by the best methods, the \textit{deep ensemble learning} will further combine them together with a deep ensemble layer. To show the advantages of \textit{deep ensemble learning model}, we will also compare it with the \textit{majority vote} ensemble strategy in the experiments. The experimental results will be evaluated by the well used metrics: F1, Precision, Recall, Accuracy, MSE, MAE and $R^2$ respectively. 

\vspace{-10pt}
\subsection{Experimental Results}
\vspace{-8pt}

The experimental results achieved on these $3$ different datasets are provided in Tables~\ref{tab:MNIST}-\ref{tab:IMDB}, where the best results for each metric is in a bolded font and the blue numbers in the brackets denote the relative rank of the methods. The average rank for each method based on these $4$ metrics is also provided in the right column of the tables.

\begin{table*}[t]
\caption{Learning Results on IMDB Sentiment Dataset (Label Space $\{1, 2, \cdots, 10\}$).}
\scriptsize
\centering
\setlength{\tabcolsep}{4pt}
\begin{tabular}{l  cccccccc  }
\toprule

methods &\textbf{Accuracy} &\textbf{F1} &\textbf{Precision} &\textbf{Recall} &\textbf{MSE} &\textbf{MAE} &{$R^2$} &\textbf{Avg. Rank}  \\
\midrule 

{Deep Ensemble}	
&\textbf{0.383} {\tiny {\color{blue} (1)}}
&\textbf{0.368} {\tiny {\color{blue} (1)}}
&\textbf{0.350} {\tiny {\color{blue} (1)}}
&\textbf{0.387} {\tiny {\color{blue} (1)}}
&\textbf{6.314} {\tiny {\color{blue} (1)}}
&\textbf{1.250} {\tiny {\color{blue} (1)}}
&\textbf{0.452} {\tiny {\color{blue} (1)}}
&{\color{blue} (1)}\\

{Majority Vote}
&\textbf{0.367}  {\tiny {\color{blue} (2)}}
&\textbf{0.354}  {\tiny {\color{blue} (2)}}
&\textbf{0.345}  {\tiny {\color{blue} (3)}}
&\textbf{0.367}  {\tiny {\color{blue} (2)}}
&\textbf{7.504}  {\tiny {\color{blue} (3)}}
&\textbf{1.653}  {\tiny {\color{blue} (3)}}
&\textbf{0.385}  {\tiny {\color{blue} (3)}}
&{\color{blue} (2)}\\

\midrule 

{Adaboost} \cite{S99}	
&0.361 {\tiny {\color{blue} (5)}}
&0.285 {\tiny {\color{blue} (6)}}
&0.293 {\tiny {\color{blue} (6)}}
&0.361 {\tiny {\color{blue} (5)}}
&10.766 {\tiny {\color{blue} (6)}}
&2.078 {\tiny {\color{blue} (6)}}
&0.117 {\tiny {\color{blue} (6)}}
&{\color{blue} (6)}\\

{Decision Tree} \cite{Q86}	
&0.252 {\tiny {\color{blue} (8)}}
&0.249 {\tiny {\color{blue} (7)}}
&0.247 {\tiny {\color{blue} (7)}}
&0.252 {\tiny {\color{blue} (8)}}
&14.431 {\tiny {\color{blue} (7)}}
&2.680 {\tiny {\color{blue} (7)}}
&-0.184 {\tiny {\color{blue} (7)}}
&{\color{blue} (7)}\\

{Logistic Regression} \cite{PLI02}	
&\textbf{0.365} {\tiny {\color{blue} (3)}}
&\textbf{0.353} {\tiny {\color{blue} (3)}}
&0.344 {\tiny {\color{blue} (4)}}
&\textbf{0.365} {\tiny {\color{blue} (3)}}
&7.534 {\tiny {\color{blue} (4)}}
&1.700 {\tiny {\color{blue} (4)}}
&0.382 {\tiny {\color{blue} (4)}}
&{\color{blue} (4)}\\

{Multi-layer Perceptron} \cite{B95}	
&0.363 {\tiny {\color{blue} (4)}}
&0.352 {\tiny {\color{blue} (4)}}
&\textbf{0.349} {\tiny {\color{blue} (2)}}
&0.363 {\tiny {\color{blue} (4)}}
&\textbf{6.767} {\tiny {\color{blue} (2)}}
&\textbf{1.627} {\tiny {\color{blue} (2)}}
&\textbf{0.445} {\tiny {\color{blue} (2)}}
&{\color{blue} (3)}\\

{Naive Bayesian} \cite{DP97}	
&0.169 {\tiny {\color{blue} (9)}}
&0.173 {\tiny {\color{blue} (9)}}
&0.188 {\tiny {\color{blue} (9)}}
&0.169 {\tiny {\color{blue} (9)}}
&18.016 {\tiny {\color{blue} (9)}}
&3.204 {\tiny {\color{blue} (9)}}
&-0.478 {\tiny {\color{blue} (9)}}
&{\color{blue} (9)}\\

{Random Forest} \cite{B01}	
&0.284 {\tiny {\color{blue} (7)}}
&0.235 {\tiny {\color{blue} (8)}}
&0.234 {\tiny {\color{blue} (8)}}
&0.284 {\tiny {\color{blue} (7)}}
&17.490 {\tiny {\color{blue} (8)}}
&2.904 {\tiny {\color{blue} (8)}}
&-0.435 {\tiny {\color{blue} (8)}}
&{\color{blue} (8)}\\

{Support Vector Machine} \cite{CV95}	
&0.332 {\tiny {\color{blue} (6)}}
&0.328 {\tiny {\color{blue} (5)}}
&0.324 {\tiny {\color{blue} (5)}}
&0.332 {\tiny {\color{blue} (6)}}
&8.823 {\tiny {\color{blue} (5)}}
&1.901 {\tiny {\color{blue} (5)}}
&0.276 {\tiny {\color{blue} (5)}}
&{\color{blue} (5)}\\

\bottomrule
\end{tabular}\label{tab:IMDB}
\end{table*}

As indicated in Tables~\ref{tab:MNIST} and \ref{tab:ORL}, among the unit baseline methods, CNN can achieve the best performance based on the MNIST and ORL datasets, which is selected as the unit model to ensemble for both MNIST and ORL datasets. For these two ensemble methods, Majority Vote actually achieves very close performance as the unit model. For instance, the Accuracy obtained by Majority Vote on MNIST is $0.986$, which is very close to that obtained by CNN; the Accuracy score obtained by Majority Vote on ORL is $0.955$, which is slightly higher than CNN. On the other hand, the \textit{deep ensemble} method can greatly improve the learning results, which increase the Accuracy of CNN on MNIST and ORL by $0.4\%$ and $1.5\%$ respectively.

Besides the image datasets, for the IMDB textual review dataset, among the baseline methods, Logistic Regression can perform the best among the unit models according to the Accuracy, F1 and Recall metrics according to Table~\ref{tab:IMDB}. Meanwhile, for the remaining metrics, the Multi-Layer Perceptron (MLP) can perform the best. In the experiments, MLP is selected as the unit model for ensemble. According to the results, the \textit{deep ensemble} strategy can improve the learning results compared against MLP by $5.5\%$ for Accuracy and $6.7\%$ for MSE.
\vspace{-10pt}
\section{Conclusion}\label{sec:conclusion}
\vspace{-10pt}

In this paper, we have introduced a novel \textit{deep ensemble learning} model based on deep learning and ensemble learning. In the case when the unit models meet certain properties, i.e., \textit{bounded}, \textit{measurable} and \textit{sigmoidal}, the \textit{deep ensemble learning} model is capable to achieve universal approximation to any functions. Furthermore, according to the analysis provided in the paper, for the \textit{deep ensemble learning} model with one single layer, the number of required unit model to achieve universal approximation is exponential to the input feature dimension $d$, i.e., $O(2^d)$. Meanwhile, as the ensemble layer goes deeper to $\lceil \log d \rceil + 1$, the number of required unit models will decrease dramatically to $O(d)$

\bibliographystyle{abbrv}
\bibliography{reference}

\begin{thebibliography}{10}

\bibitem{A10}
E.~Alpaydin.
\newblock {\em Introduction to Machine Learning}.
\newblock 2010.

\bibitem{ASKR12}
E.~Arisoy, T.~Sainath, B.~Kingsbury, and B.~Ramabhadran.
\newblock Deep neural network language models.
\newblock In {\em WLM}, 2012.

\bibitem{BGRS99}
K.~Beyer, J.~Goldstein, R.~Ramakrishnan, and U.~Shaft.
\newblock When is ''nearest neighbor'' meaningful?
\newblock In {\em ICDT}, 1999.

\bibitem{B95}
C.~Bishop.
\newblock {\em Neural Networks for Pattern Recognition}.
\newblock 1995.

\bibitem{B06}
C.~Bishop.
\newblock {\em Pattern Recognition and Machine Learning (Information Science
  and Statistics)}.
\newblock 2006.

\bibitem{B96}
L.~Breiman.
\newblock Bagging predictors.
\newblock {\em Mach. Learn.}, 1996.

\bibitem{B01}
L.~Breiman.
\newblock Random forests.
\newblock {\em Mach. Learn.}, 2001.

\bibitem{CCN97}
J.~Cederquist, T.~Coquand, and S.~Negri.
\newblock The hahn-banach theorem in type theory, 1997.

\bibitem{CHTQAH15}
S.~Chang, W.~Han, J.~Tang, G.~Qi, C.~Aggarwal, and T.~Huang.
\newblock Heterogeneous network embedding via deep architectures.
\newblock In {\em KDD}, 2015.

\bibitem{CV95}
C.~Cortes and V.~Vapnik.
\newblock Support-vector networks.
\newblock {\em Mach. Learn.}, 1995.

\bibitem{C89}
G.~Cybenko.
\newblock {Approximation by superpositions of a sigmoidal function}.
\newblock {\em Mathematics of Control, Signals, and Systems (MCSS)}, 1989.

\bibitem{DHK13}
L.~Deng, G.~Hinton, and B.~Kingsbury.
\newblock New types of deep neural network learning for speech recognition and
  related applications: An overview.
\newblock In {\em ICASSP}, 2013.

\bibitem{DP97}
P.~Domingos and M.~Pazzani.
\newblock On the optimality of the simple bayesian classifier under zero-one
  loss.
\newblock {\em Mach. Learn.}, 1997.

\bibitem{FS97}
Y.~Freund and R.~Schapire.
\newblock A decision-theoretic generalization of on-line learning and an
  application to boosting.
\newblock {\em J. Comput. Syst. Sci.}, 1997.

\bibitem{GBC16}
I.~Goodfellow, Y.~Bengio, and A.~Courville.
\newblock {\em Deep Learning}.
\newblock MIT Press, 2016.
\newblock \url{http://www.deeplearningbook.org}.

\bibitem{H12}
G.~Hinton.
\newblock A practical guide to training restricted boltzmann machines.
\newblock In {\em Neural Networks: Tricks of the Trade (2nd ed.)}. 2012.

\bibitem{HDYDMJSVNSK12}
G.~Hinton, L.~Deng, D.~Yu, G.~Dahl, A.~Mohamed, N.~Jaitly, A.~Senior,
  V.~Vanhoucke, P.~Nguyen, T.~Sainath, and B.~Kingsbury.
\newblock Deep neural networks for acoustic modeling in speech recognition.
\newblock {\em IEEE Signal Processing Magazine}, 2012.

\bibitem{HOT06}
G.~Hinton, S.~Osindero, and Y.~Teh.
\newblock A fast learning algorithm for deep belief nets.
\newblock {\em Neural Comput.}, 2006.

\bibitem{H98}
T.~Ho.
\newblock The random subspace method for constructing decision forests.
\newblock {\em IEEE Trans. Pattern Anal. Mach. Intell.}, 1998.

\bibitem{HS97}
S.~Hochreiter and J.~Schmidhuber.
\newblock Long short-term memory.
\newblock {\em Neural Comput.}, 1997.

\bibitem{J02}
H.~Jaeger.
\newblock {Tutorial on training recurrent neural networks, covering BPPT, RTRL,
  EKF and the ``echo state network'' approach}.
\newblock Technical report, Fraunhofer Institute for Autonomous Intelligent
  Systems (AIS), 2002.

\bibitem{JDM00}
A.~Jain, R.~Duin, and J.~Mao.
\newblock Statistical pattern recognition: A review.
\newblock {\em IEEE Trans. Pattern Anal. Mach. Intell.}, 2000.

\bibitem{KTSLSL14}
A.~Karpathy, G.~Toderici, S.~Shetty, T.~Leung, R.~Sukthankar, and F.~Li.
\newblock Large-scale video classification with convolutional neural networks.
\newblock In {\em CVPR}, 2014.

\bibitem{KSH12}
A.~Krizhevsky, I.~Sutskever, and G.~Hinton.
\newblock Imagenet classification with deep convolutional neural networks.
\newblock In {\em NIPS}, 2012.

\bibitem{LBH15}
Y.~LeCun, Y.~Bengio, and G.~Hinton.
\newblock {Deep learning}.
\newblock {\em Nature}, 521, 2015.
\newblock \url{http://dx.doi.org/10.1038/nature14539}.

\bibitem{lin2016cheap}
H.~W. Lin and M.~Tegmark.
\newblock Why does deep and cheap learning work so well?, 2016.

\bibitem{M09}
S.~Marsland.
\newblock {\em Machine Learning: An Algorithmic Perspective}.
\newblock 2009.

\bibitem{MH09}
A.~Mnih and G.~Hinton.
\newblock A scalable hierarchical distributed language model.
\newblock In {\em NIPS}. 2009.

\bibitem{PLI02}
C.~Peng, K.~Lee, and G.~Ingersoll.
\newblock An introduction to logistic regression analysis and reporting.
\newblock {\em The Journal of Educational Research}, 2002.

\bibitem{PAS14}
B.~Perozzi, R.~Al-Rfou, and S.~Skiena.
\newblock Deepwalk: Online learning of social representations.
\newblock In {\em KDD}, 2014.

\bibitem{Q86}
J.~Quinlan.
\newblock Induction of decision trees.
\newblock {\em Mach. Learn.}, 1986.

\bibitem{R87}
W.~Rudin.
\newblock {\em Real and Complex Analysis, 3rd Ed.}
\newblock 1987.

\bibitem{SH09}
R.~Salakhutdinov and G.~Hinton.
\newblock Semantic hashing.
\newblock {\em International Journal of Approximate Reasoning}, 2009.

\bibitem{S90}
R.~Schapire.
\newblock The strength of weak learnability.
\newblock {\em Mach. Learn.}, 1990.

\bibitem{S99}
R.~Schapire.
\newblock A brief introduction to boosting.
\newblock In {\em IJCAI}, 1999.

\bibitem{VLLBM10}
P.~Vincent, H.~Larochelle, I.~Lajoie, Y.~Bengio, and P.~Manzagol.
\newblock Stacked denoising autoencoders: Learning useful representations in a
  deep network with a local denoising criterion.
\newblock {\em J. Mach. Learn. Res.}, 2010.

\bibitem{WCZ16}
D.~Wang, P.~Cui, and W.~Zhu.
\newblock Structural deep network embedding.
\newblock In {\em KDD}, 2016.

\bibitem{WBU10}
J.~Weston, S.~Bengio, and N.~Usunier.
\newblock Large scale image annotation: Learning to rank with joint word-image
  embeddings.
\newblock {\em Journal of Machine Learning}, 2010.

\bibitem{WBU11}
J.~Weston, S.~Bengio, and N.~Usunier.
\newblock Wsabie: Scaling up to large vocabulary image annotation.
\newblock In {\em IJCAI}, 2011.

\bibitem{WFH11}
I.~Witten, E.~Frank, and M.~Hall.
\newblock {\em Data Mining: Practical Machine Learning Tools and Techniques}.
\newblock Morgan Kaufmann Series in Data Management Systems. 2011.

\bibitem{WD88}
B.~Wittner and J.~Denker.
\newblock Strategies for teaching layered networks classification tasks.
\newblock In {\em NIPS}. 1988.

\bibitem{ijcai2017-497}
Z.~Zhou and J.~Feng.
\newblock Deep forest: Towards an alternative to deep neural networks.
\newblock In {\em IJCAI}, 2017.

\bibitem{ZF17}
Z.~Zhou and J.~Feng.
\newblock Deep forest: Towards an alternative to deep neural networks.
\newblock {\em CoRR}, abs/1702.08835, 2017.

\end{thebibliography}

\end{document}